\def\lim{\mbox{lim}}
\newcommand{\EqRef}[1]{Eq.~(\ref{#1})}
\newcommand{\ThemRef}[1]{Theorem~\ref{#1}}
\newcommand{\FigRef}[1]{Figure~\ref{#1}}
\newcommand{\AlgRef}[1]{Algorithm~\ref{#1}}
\newcommand{\TabRef}[1]{Table~\ref{#1}}
\newtheorem{theorem}{Theorem}
\newtheorem{definition}{Definition}
\newenvironment{proof}[1][Proof]{\begin{trivlist}
\item[\hskip \labelsep {\bfseries #1}]}{\end{trivlist}}
\begin{document}
\begin{premaker}

\title{On the Complexity of One-class SVM for Multiple Instance Learning}
\thanks[label1]{Project supported by the National Nature Science
Foundation of China (No.~***).}

\author[author1]{Zhen Hu\corauthref{cor1}}
\ead{49859211@qq.com}
\corauth[cor1]{Corresponding author.}
\author[author1]{Zhuyin Xue}

\address[author1]{Sci. $\&$ Technol. on Inf. Syst. Eng. Lab., Nanjing 210000, China}


\begin{abstract}

In  traditional multiple instance learning (MIL), both positive and
negative bags are required to learn a prediction function.
However, a high human cost is needed to know the label of each bag---positive or negative.
Only positive bags contain our focus (positive instances) while negative bags consist of noise or background (negative instances).
So we do not expect to spend too much to label the negative bags.
Contrary to our expectation, nearly all existing MIL methods require enough negative bags besides positive ones.
In this paper we propose an algorithm called ``Positive Multiple Instance'' (PMI), which learns a classifier given only a set of positive bags. So the annotation of negative bags becomes unnecessary in our method.
PMI is constructed based on the assumption that the unknown positive instances in positive bags be similar each other and constitute one compact cluster in
feature space and the negative instances locate outside this cluster.
The experimental results demonstrate that PMI achieves the
performances close to or a little worse than those of the traditional MIL algorithms on benchmark and real data sets. However, the number of training bags in PMI is reduced significantly compared with traditional MIL algorithms.
\end{abstract}
\begin{keyword}
Multiple Instance Learning \sep One-class
\end{keyword}
\end{premaker}


\section{Introduction}
Multiple instance learning (MIL) is introduced by \cite{dietterich1997solving} to solve the drug activity prediction problem. During the past years, MIL approaches have been applied successfully to
many real applications such as image categorization
\cite{chen2006miles,chen2004image,tang2010image}, image retrieval
\cite{zhang2002content,zhang2005multiple}, tracking
\cite{babenko2011robust}, web mining \cite{zafra2011multiple}, gene expression \cite{li2012drosophila}  and medical diagnosis
\cite{fung2007multiple}. In traditional supervised learning,
each instance(feature vector) of training set corresponds to one given label.
By contrast, every set of instances (not one instance) is associated with a given label in MIL. Each instance set is called a ``bag''. If a bag contains at least one positive instance, it is labeled as positive and negative otherwise. It is unknown which instance is positive in each bag. In other words, only bag level label is available while instance level label is not in positive bags. The task of MIL is to learn a
concept to predict the label of an unseen bag.


In nearly all existing MIL algorithms, both positive and negative bags are required during the training phase.
But only the positive instances are our focus generally and the negative instances are unrelated to our interest. For example, if we attempts to learn a concept of face, the face patches in an image are positive instances and other non-face images are labeled as negative bags. According to this preference, we are willing to concentrate on labeling face images and ignore non-face images as much as possible. Non-face images are not our interest.
However, negative bags are not less and even much more than positive in many real data sets such as Corel \cite{Corel}. The availability of negative bags requires a high cost.
So labeling on negative bags brings great inconvenience in real applications.
In addition, it takes more time to label
one negative bag than positive in that every instance must
be confirmed to be negative. In
contrast, remaining instances in one positive bag can be ignored
if only one positive instance is found. The labeling on negative bags increases label cost significantly. So it is necessary to design a method using only positive bags to learn.

One related work is \cite{zhang2005multiple}.
His method is simply to solve a query
problem, in which an extra positive instance must be provided by
user. The most similar instance in every positive bag to this provided positive instance
is considered positive. This solution has the following
disadvantages. Firstly, the positive instance is difficult to
provide or unavailable in some applications. So \cite{zhang2005multiple} can be applied to a limited range of settings.
Secondly, some bags contain more than one positive instances. \cite{zhang2005multiple} cannot select multiple positive instances accurately in one bag. Thirdly, the performance of this algorithm depends mostly upon the positive instance given by user. It causes
the prediction accuracy of \cite{zhang2005multiple} is too sensitive to the provided positive instance. In contrast, our proposed algorithm in this paper
requires no additional positive instance in most cases. Therefore, our proposed algorithm can solve a wider range of problems.

In this paper, a new algorithm called PMI(Positive Multiple
Instance) is proposed, which learns a concept from only positive training bags.
PMI is designed based on the assumption that the positive instances
constitute one compact cluster in feature space and most negative instances
locate outside this cluster. 
PMI works with two major steps---training and query.
The training step is to convert the problem of learning with only positive bags into a one-class classification problem \cite{taylor2000support} to get a classifier $f$. If the instance level label is not available, PMI terminates and outputs $f$ as the concept. Otherwise, go to the query step. The query step
is to select an instance to query its label from instances positively labeled by the
classifier $f$ in the previous training step. If the queried instance is negative,
remove the instances positively labeled by the classifier $f$ from the training bags and return to the training step.
Otherwise, PMI outputs $f$ as the desired concept and terminates.
We provide the maximum number of queried instances
theoretically. The queried instance number in real applications usually is
far smaller than the theoretical result.
The experimental results demonstrate that PMI achieves close
performances to those of the traditional MIL algorithms on most data sets.
Our contribution is that the training bags number can be reduced greatly at a little or no worse on accuracy.
Negative bags are unnecessary for training in our method.

The remainder of this paper is organized as follows. We introduce PMI algorithm in Section 2 in detail.
Section 3 illustrates PMI with experiments. We conclude on our work in Section 4 finally.

\section{Our Proposed Algorithm}
In this section, we propose PMI algorithm.
One-class SVM \cite{taylor2000support} is involved in PMI, so a
brief review on one-class SVM is presented here.

\subsection{Review on One-Class SVM}
The formal definition of one-class SVM is described as following. $N$
$d$-dimensional instances ${x_1, x_2, ... , x_N}$ in feature space $\mathbb{R}^d$ are given, where $\mathbb{R}$ denotes real number field.
The task is to learn a prediction function takes value +1 while capturing most given instances in a small region of feature space and -1 otherwise.
One-class SVM maximizes the margin between the
training instances and origin in feature space to obtain a decision
hyperplane by the following formulation:
\begin{equation}\label{one-class SVM}
\begin{array}{l}
\displaystyle    \min_{w,\rho,\xi,\nu}  \frac{1}{2}\|w\|^2 +
\frac{1}{\nu N}\sum_{i}\xi_i
    -\rho  \\[1.0ex]
\displaystyle    \mbox{s. t.}~~(w\cdot \Phi(x_i))\geq \rho - \xi_i,  \\[1.0ex]
\displaystyle    ~~~~~~\xi_i\geq 0, w\in \mathbb{R}^d, \rho\in \mathbb{R},   
\end{array}
\end{equation}
where $\nu\in (0,1)$ is a parameter to balance the regularization
$\|w\|^2$ and the hinge loss $\displaystyle \sum_{i}\xi_i$, $w$
and $\rho$ are the weight coefficient and the bias in linear
decision function respectively, $\xi, ~~\xi^T=[\xi_1,...,\xi_n]$ denotes a slack variable vector,
$\xi_i$ is the $i$-th element of $\xi$,
and $\Phi(x_i)$ is a nonlinear map function to $x_i$ to deal with nonlinear boundary of training instances. A kernel
function $ker(x, y)$ is defined to replace $\Phi(x)\cdot \Phi(y)$.
Model \ref{one-class SVM} is a quadratic programming problem, which can be solved through its dual form:
\begin{equation}\label{one-class SVM primal}
\begin{array}{l}
\displaystyle    \min_{\alpha}  \frac{1}{2}\sum_{i,j}\alpha_{i}\alpha_{j}ker(x_i,x_j)  \\[1.0ex]
\displaystyle    \mbox{s. t.}~~   0\leq \alpha_i \leq
\frac{1}{\nu N}, ~~\sum_i\alpha_i=1,
\end{array}
\end{equation}
where $\alpha_i$ is a Lagrange multiplying factor of $x_i$. The value of
$\alpha_i$ is summarized into three categories according to where $x_i$ locates:
\begin{enumerate}
  \item $\alpha_i=0$ $\rightarrow$ $x_i$ locates inside positive instance region;
  \item $\frac{1}{\nu N}>\alpha_i>0$$\rightarrow$ $x_i$ lies on the prediction function boundary;
  \item $\alpha_i=\frac{1}{\nu N}$$\rightarrow$ $x_i$ falls outside the positive instance region.
\end{enumerate}

The label of an unseen instance $x$ is predicted by the following function:
\begin{equation}\label{one-class SVM result}
 \displaystyle   f(x) = \mbox{sign}(\sum_{i}\alpha_i ker(x, x_i)-\rho),
\end{equation}
where the function $\mbox{sign}(y)$ outputs +1 if $y\geq 0$ and -1 otherwise. If
$f(x)\geq 0$, we assume that $x$ should be similar to the training instances with a high likelihood. Otherwise, $x$ is an outlier.
$\rho$ in \EqRef{one-class SVM result} is computed:
\begin{equation}\label{rho solution}
\displaystyle    \rho = \sum_{i}\alpha_i ker(x_i,x_j),
\end{equation}
where $x_j$ is a support vector, which implies that $x_j$ locates on the
boundary of decision function $f(x)$ and $f(x)=0$.

Various complex separation hyperplanes can be described by different
types of kernel functions, such as polynomial, RBF, sigmod or
self-defined ones. RBF kernel is given by:
\begin{equation}\label{RBF kernel}
    ker(x,y) = e^{\gamma\|x-y\|^2},
\end{equation}
where $\gamma$ denotes a parameter to control how similar $x$ is to $y$. In
general, we assume that samples of interest locate inside closed regions in the feature space. RBF function is one of the most widely used
kernel for its flexibility. We will use RBF as the default kernel
function in the rest of this paper.

\subsection{Our Proposed Algorithm}
Our proposed algorithm PMI is introduced in detail in this subsection.
At the beginning, we provide the formal definition of the problem PMI solves.
A collection of bags $\{B_1, B_2,
..., B_N\}$ are given as the training set. The $i$-th bag in the training set $B_i$
contains $N_i$ $d$-dimensional instances $B_i=[B_{i1},..., B_{i
N_i}]\in \mathbb{R}^{d\times N_i}$ and $B_{ij} \in \mathbb{R}^d$ is a $d$-dimensional instance, $j=1,...,N_i$. We
define an $d\times n$ matrix stacking all instances together $B=[B_1,...,B_{N}]$, where $n$ denotes the total number of all instances in $N$ bags. At least one
positive instance resides in each bag of training set, but it is unknown which instance is positive.
The same as the label rule in previous MIL approaches, a bag is positive if it contains at least one positive instance and is negative otherwise.
Our goal is to learn a function to predict the label of an unseen bag. PMI works in two steps: training and query.
If instance level label is available, PMI algorithm alternates between training and query step until the desired concept is obtained. Otherwise, PMI runs training step for once. The remainder describes the two steps explicitly.

\subsubsection{Training Step}
As is mentioned in Section Introduction, PMI assumes that positive instances be similar to each other in feature space. We explain this assumption in \FigRef{fourfaces}.
Take face identification as an example, the task is to learn a function to predict whether an image contains face or not. So the patches containing face are positive instances we are focus on in \FigRef{fourfaces} and the other non-face patches are negative instances. It can be easily found that face patches (red rectangles in \FigRef{fourfaces}) look similar to each other while non-face patches are dissimilar. In summary, positive instances refer to one concept leading to the high similarity between positive and negative instances usually locate outside the region occupied by positive instances in feature space.
\begin{figure}
  \centering
  \includegraphics[scale=0.5]{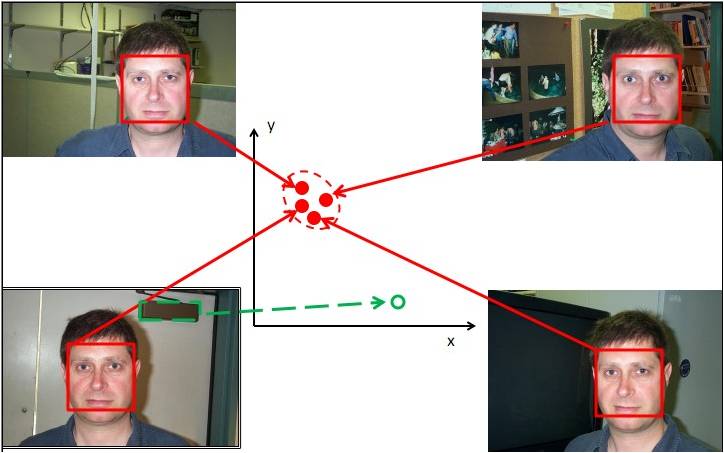}\\
  \caption{It is illustrated positive instances are clustered in a small region and negative instances locate outside. The red solid points are positive and the green circle is negative.}\label{fourfaces}
\end{figure}

According to the previous explanation and illustration in \FigRef{fourfaces}, we assume that most positive instances gather in a compact cluster and negative instances locate outside this cluster.
At least one instance in most training bags resides in this positive cluster (the cluster composed of positive instances).
Now we needs a function to describe the positive cluster. This function takes the value +1 in the positive cluster region and takes -1 elsewhere. We can apply this function on every instance in one bag to identify whether this bag is positive or not.
But since we actually do not which instance is positive, the positive concept cannot directly be learned from positive instances.


As is stated in \cite{andrews2003support,fung2007multiple}, the selection of
instances will be formulated as a combinatorial optimization problem,
which is non-convex and difficult to solve especially when a global
optimal solution is required. We tend to approximate one virtual instance $b_i$ with a high likelihood to be positive for each training bag $B_i$. The desired concept can be learned from these approximated positive instances.
To make our method efficient and easy to solve, a linear combination coefficient
vector $\lambda_i$ is used to convert each bag $B_i$ to a virtual positive
instance $b_i$ in feature space:
\begin{equation}\label{linear coefficient}
\begin{array}{l}
\displaystyle  ~~~~b_i =  B_i\lambda_i = \sum_{j=1}^{N_i}B_{ij}\lambda_{ij},  \\[1.0ex]
\displaystyle  \mbox{s.t.}~\sum_{j}\lambda_{ij}=1,~~\lambda_{ij} \geq
0,~~i=1,...,N,
\end{array}
\end{equation}
where $\lambda_i=[\lambda_{i1},...,\lambda_{iN_i}]^\top$.
Let $\lambda$ denote the vector concatenating all $\lambda_i$, $\lambda= [\lambda_1;
\lambda_2; ...; \lambda_N]$. Assume that $b_i$ is positive, $b_i$ should be highly close to the positive instances in $B_i$. Furthermore, if $B_{ij}$ is positive, $\lambda_{ij}$ will be assigned a larger value and small otherwise. According to the assumption that the positive instances get similar to each other, all $b_i$ should get close to each other in feature space. To keep the distances between $b_i$ as small as possible, the variance between all $b_i$ is minimized to obtain $\lambda$:
\begin{equation}\label{squared distance sum2}
\begin{array}{l}
\displaystyle \min ~~ \sum_{i=1}^{N}(B_i\lambda_i-m_i)^{T}(B_i\lambda_i-m_i)  \\[1.5ex]
\displaystyle \mbox{s. t.}~~~  \sum_{j=1}^{N_i}\lambda_{ij}=1,
~~ \lambda_{ij}\geq 0,
\end{array}
\end{equation}
where $\displaystyle m_i=\frac{1}{N}\sum_{i=1}^{N}B_i\lambda_i$.

We define matrix $Z_i$ with $n\times n$ size, $i=1,...,N$. The
diagnose elements in $Z_i$ are 1 at location $(j,j),~~j=N_{i-1}+1,...,N_i$ in
$Z_i$ and 0 elsewhere. It satisfies that
$\displaystyle \sum_{i=1}^{N}Z_i=I$, where $I$ is an identity matrix with
$n\times n$ size.

 \EqRef{squared distance sum2} is rewritten as a standard quadratic programming formulation:
\begin{equation}\label{squared distance sum3}
\begin{array}{l}
\displaystyle \min ~ \lambda^{T} (\sum_{i=1}^{N} [B Z_i-\frac{1}{N}B]^{T}[B Z_i-\frac{1}{N}B]) \lambda  \\[1.5ex]
\displaystyle \mbox{s. t.}~~~~  \sum_{j=1}^{N_i}\lambda_{ij}=1,
~~
 \lambda_{ij}\geq 0,
\end{array}
\end{equation}
where $\displaystyle [B Z_i-\frac{1}{N}B]^{T}[B Z_i-\frac{1}{N}B]$ is
really symmetric and semi-positively definite, so $\displaystyle
(\sum_{i=1}^{N} [B Z_i-\frac{1}{N}B]^{T}[B Z_i-\frac{1}{N}B])$ is also
real symmetric and semi-positive definite. The non-zero part of
$B Z_i\lambda$ is equal to $B_i\lambda_i$. The formulation \EqRef{squared distance sum3} is a convex optimization problem.

To tackle nonlinear separable data, the kernel trick is applied to \EqRef{squared distance sum3}. We reformulate
the $B^\top B$ as the kernel matrix $K$, whose element $K_{ij}$ at $i$-th row and $j$-th column is the kernel function value $ker(x_i,x_j)$, $i,j=1,...,n$.
\EqRef{squared distance sum3} is reformulated:
\begin{equation}\label{kernel squared distance sum}
\begin{array}{l}
\displaystyle  \min ~~ \lambda^{T} (\sum_{i=1}^{N} [K Z_i-\frac{2}{N}K Z_i+\frac{1}{N^2}K]) \lambda  \\[1.5ex]
\displaystyle \mbox{s. t.}~~~~  \sum_{j=1}^{N_i}\lambda_{ij}=1,
~~
 \lambda_{ij}\geq 0.
\end{array}
\end{equation}

As $\lambda$ is known by \EqRef{kernel squared distance sum},
we can learn the concept of positive instance from $\{b_i,...,b_N\}$ using one-class SVM method. The kernel function value between two virtual positive instances is computed:
\begin{equation}\label{kernel definition}
\begin{array}{l}
\displaystyle ker(b_i,b_j)=\sum_{k=1}^{N_i}\sum_{r=1}^{N_j}\lambda_{ik}\lambda_{jr}ker(B_{ik}B_{jr}).  \\[1.5ex]
\end{array}
\end{equation}
\EqRef{one-class SVM primal} is reformulated as the following:
\begin{equation}\label{one-class MI SVM primal}
\begin{array}{l}
\displaystyle    \min_{\alpha}~~  \frac{1}{2}\sum_{i,j}\alpha_{i}\alpha_{j}\sum_{k,r}\lambda_{ik}\lambda_{jr}ker(B_{ik},B_{jr})  \\[1.5ex]

\displaystyle    \mbox{s. t.}~~~~   0\leq \alpha_i \leq
\frac{1}{\nu N}, ~~\sum_i\alpha_i=1.
\end{array}
\end{equation}
As is described previously, $\alpha_i$ in three different ranges implies where $b_i$ locates in feature space.
According to the value of $\alpha_i$, bag $B_i$ can be summarized as two kinds:
\begin{definition}\label{support bag} (\textbf{Support Bag})
For a bag $B_i$, if ~$0 < \alpha_i < \frac{1}{\nu N}$, then $B_i$ is a
``\textbf{Support Bag}''.
\end{definition}
\begin{definition}\label{outlier bag} (\textbf{Outlier Bag})
For a bag $B_i$, if ~$\alpha_i = \frac{1}{\nu N}$, then $B_i$ is an
``\textbf{Outlier Bag}''.
\end{definition}

By solving \EqRef{one-class MI SVM primal}, the prediction function is:
\begin{equation}\label{one-class MI decision function}
\begin{array}{l}
\displaystyle    f(B_i) = \mbox{sign}\big(\max_{r=1,...,N_{i}}l(B_{ir}) \big), \\[1.5ex]
\displaystyle l(B_{ir}) =
\sum_{j=1}^{N}\alpha_j\sum_{k=1}^{N_i}\lambda_{jk} ker(B_{ir},
B_{jk}) - \rho,
\end{array}
\end{equation}
where $B_i$ is the predicted bag. $\rho$ is computed:
\begin{equation}\label{MI rho solution}
\begin{array}{l}
\displaystyle    \rho = \sum_{i=1}^{N}\alpha_i
\sum_{k=1}^{N_i}\sum_{r=1}^{N_j} \lambda_{ik}\lambda_{jr}
ker(B_{ik}, B_{jr}), \\
\displaystyle   ~~B_j \mbox{ satisfies: } 0<\alpha_j<\frac{1}{\nu N}.
\end{array}
\end{equation}

If it holds that $\displaystyle \max_j f(B_{ij})=f(b_i), \forall
i=1,...,N$, $\nu$ satisfies the following theorem:
\begin{theorem} \label{th1}
If $\rho(\neq 0)$ is the solution of \EqRef{one-class MI SVM
primal} and $\displaystyle \max_j f(B_{ij})=f(b_i)(\forall
i=1,...,N$), the following holds: $\nu$ is the upper bound on the
fraction of outlier bags.
\end{theorem}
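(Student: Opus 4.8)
To prove \ThemRef{th1}, the plan is to mirror the classical $\nu$-property of one-class SVM at the bag level. I would first pass from the dual \EqRef{one-class MI SVM primal} to the primal it comes from, namely \EqRef{one-class SVM} with each training point $x_i$ replaced by the virtual positive instance $b_i=B_i\lambda_i$ and the kernel replaced by \EqRef{kernel definition}. Writing its Lagrangian, with multipliers $\alpha_i\ge 0$ for the margin constraint $(w\cdot\Phi(b_i))\ge\rho-\xi_i$ and $\beta_i\ge 0$ for $\xi_i\ge 0$, the stationarity conditions give $w=\sum_i\alpha_i\Phi(b_i)$, $\sum_i\alpha_i=1$, and $\alpha_i+\beta_i=\frac{1}{\nu N}$, together with the complementary-slackness relations $\beta_i\xi_i=0$ and $\alpha_i\big((w\cdot\Phi(b_i))-\rho+\xi_i\big)=0$; note that $\sum_i\alpha_i=1$ and $0\le\alpha_i\le\frac{1}{\nu N}$ are exactly the constraints already displayed in \EqRef{one-class MI SVM primal}.

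The heart of the argument is then a short counting step. Let $O=\{\,i : B_i\text{ is an outlier bag}\,\}$. By \DefRef{outlier bag} we have $\alpha_i=\frac{1}{\nu N}$ for each $i\in O$, and $\alpha_i\ge0$ for all $i$, so summing the equality constraint over $O$,
\begin{equation}
1=\sum_{i=1}^{N}\alpha_i\ \ge\ \sum_{i\in O}\alpha_i=\frac{|O|}{\nu N},
\end{equation}
whence $|O|\le\nu N$, i.e.\ the fraction of outlier bags is at most $\nu$. The same dual bookkeeping applied to the bags with $\alpha_i>0$ gives the companion lower bound --- the other half of the usual $\nu$-property --- which we do not need here.

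It remains to explain the role of the two hypotheses, which is to make the bound above a genuine statement about the bag-level behaviour of $f$, i.e.\ that an outlier bag is precisely a training bag rejected by the learned concept. If $f(B_i)=-1$ then $\max_r l(B_{ir})<0$; since $\sum_r\lambda_{ir}l(B_{ir})$ is a convex combination of the $l(B_{ir})$ it is negative as well, and this quantity is exactly the one-class decision value of $b_i$ (by \EqRef{kernel definition} and $\sum_r\lambda_{ir}=1$), so $f(b_i)=-1$; the KKT margin relation then forces $\xi_i>0$, hence $\alpha_i=\frac{1}{\nu N}$ and $B_i$ is an outlier bag. Conversely, an outlier bag has $\beta_i=0$, and the assumption $\rho\neq 0$ excludes the degenerate optimum in which $b_i$ could lie exactly on the boundary with $\xi_i=0$; with $\xi_i>0$ the margin relation gives $f(b_i)=-1$, and the hypothesis $\max_j f(B_{ij})=f(b_i)$ carries this back to $f(B_i)=-1$. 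I expect this last correspondence to be the one delicate point: the bag predictor \EqRef{one-class MI decision function} maximizes over the instances of a bag while the one-class SVM lives on the single averaged instance $b_i$, and only the assumption $\max_j f(B_{ij})=f(b_i)$ keeps the bag-level decision from being strictly more optimistic than that of $b_i$; without it the inequality $|O|/N\le\nu$ still holds, but ``outlier bag'' and ``bag rejected by $f$'' need no longer coincide.
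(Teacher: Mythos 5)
Your proof is correct and follows essentially the same route as the paper: the paper's entire argument is to convert each bag to the virtual instance $b_i=B_i\lambda_i$ and then invoke Proposition~4 of Sch\"olkopf et al.\ (the classical $\nu$-property of one-class SVM), which is exactly the reduction you perform. The only difference is that you prove the $\nu$-property yourself via the KKT counting argument $1=\sum_i\alpha_i\ge |O|/(\nu N)$ and spell out how the hypotheses $\rho\neq 0$ and $\max_j f(B_{ij})=f(b_i)$ tie the dual notion of ``outlier bag'' to the bag-level decision function --- details the paper delegates entirely to the citation.
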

\begin{proof}
Each bag is converted to one
virtual instance by \EqRef{linear coefficient}. \ThemRef{th1} is easy to be validated
from \emph{proposition} 4 in
\cite{taylor2000support}\cite{taylor2001estimating}.
$\Box$
\end{proof}

If $f(B_{i})=f(b_i)=1$ , at least one instance in $B_i$ must locate on or inside the prediction function boundary(\EqRef{one-class MI decision function}). However, $f(B_{i})=f(b_i)=1$ cannot be
guaranteed completely. In other words, when the virtual instance $b_i=B_i\lambda_i$ falls inside or on the function boundary, all instances in $B_i$ may fall outside. It can be expressed as follows:
\begin{equation}\label{}
\begin{array}{c}
\displaystyle    f(b_i) = +1 \neq \max_{j=1,...,N_i}f(B_{ij})=-1.
\end{array}
\end{equation}
So the assumption $\displaystyle
\max_j f(B_{ij})=f(b_i)$ in \EqRef{th1} does not always hold in
the proof of \ThemRef{th1}.
It causes the upper
bound on outlier bags fraction is larger than $\nu$ sometimes.
To keep the upper bound of outlier bags fraction on $\nu$, our solution is to select the instance $B_{is_i}$ that is closest to function boundary in each bag $B_i$ and add it to the training set. We learn a new prediction function with the updated training set. Now the training set consists of both the selected instances $\{B_is_i\}$ and
the virtual instances $\{b_i\}$. The parameter to control the
number of outlier bag becomes $\frac{\nu}{2}$ since the number of
training instances is $2N$ here.
$B_{is_i}$ is determined by the following criterion:
\begin{equation}\label{replace bag}
\displaystyle    s_i = \arg\max_{j=1,...,N_i}l(B_{ij}).
\end{equation}
The new decision function will replace the result of
\EqRef{one-class MI SVM primal} by solving the following quadratic
programming problem:
\begin{equation}\label{one-class MI SVM primal2}
\begin{array}{l}
\displaystyle    \min_{\alpha} ~~ \frac{1}{2}\sum_{p,q}\alpha_{p}\alpha_{q}\sum_{p,q}ker(x_{p},x_{q})  \\[1.5ex]
\displaystyle    \mbox{s. t.}~~~~   0\leq \alpha_p \leq
\frac{1}{\nu 2N}, ~~\sum_p\alpha_p=1  \\[1.5ex]
\displaystyle  ~~~~~~ x_p, x_q \in \{b_i\}\cup \{B_{is_i}\}, \\[1.5ex]
\displaystyle  ~~~~~~ i=1,...,N,~~~~ p, q = 1,...,2N. \\
\end{array}
\end{equation}
The prediction function from \EqRef{one-class MI SVM primal2} is:
\begin{equation}\label{one-class MI decision function2}
\begin{array}{l}
\displaystyle    f(B_i) = \mbox{sign}\big(\max_{r=1,...,N_{i}}l(B_{ir}) \big) \\
\displaystyle l(B_{ir}) =
\sum_{p=1}^{2N}\alpha_p ker(B_{ir},
x_{p}) - \rho,  \\
\displaystyle    \rho =
\sum_{p=1}^{2N} \alpha_p
ker(x_{p}, x_{q}), ~~x_q \mbox{ satisfies: } 0<\alpha_q<\frac{1}{\nu 2N}.
\end{array}
\end{equation}
Now \EqRef{one-class MI decision function2} is the desired prediction function instead of \EqRef{one-class MI decision function} if $~~\exists i,~~ f(B_i)\neq f(b_i)$.

In \EqRef{one-class MI SVM primal2}, the upper bound on
outlier bags fraction satisfies the following theorem:
\begin{theorem} \label{th:another}
If $\rho(\neq 0)$ is the solution of \EqRef{one-class MI SVM primal2}, the following holds: $\nu$ is the upper bound on the
fraction of outlier bags.
\end{theorem}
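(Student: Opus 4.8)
The plan is to reduce the theorem to the standard $\nu$-property of one-class SVM (proposition~4 of \cite{taylor2000support,taylor2001estimating}) applied to the $2N$-point training set $\{b_i\}_{i=1}^{N}\cup\{B_{is_i}\}_{i=1}^{N}$ of \EqRef{one-class MI SVM primal2}. That property, which holds because $\rho\neq 0$, says the fraction of training points that are outliers --- points $x_p$ with decision value $l(x_p)<0$, equivalently $\alpha_p=\frac{1}{\nu 2N}$ --- is at most $\nu$; since there are $2N$ such points, at most $2N\nu$ of them are outliers. The naive implication ``at most $2N\nu$ outlier points, hence at most $2N\nu$ outlier bags'' is off by a factor of two, so the substance of the proof is to show that \emph{every} outlier bag forces \emph{two} of the $2N$ points into the outlier set.

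First I would fix notation: write $l(x)=\langle w,\Phi(x)\rangle-\rho$ with $w=\sum_{p}\alpha_p\Phi(x_p)$ for the linear decision function underlying \EqRef{one-class MI decision function2}, and declare a bag $B_i$ to be an outlier bag exactly when it fails to be recognised as positive, i.e.\ $f(B_i)=-1$; by \EqRef{one-class MI decision function2} this means $l(B_{ij})<0$ for every $j=1,\dots,N_i$.

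The key lemma I would prove next is the identity $l(b_i)=\sum_{j=1}^{N_i}\lambda_{ij}\,l(B_{ij})$: the kernel definition \EqRef{kernel definition} is exactly the statement that $\Phi(b_i)=\sum_{j}\lambda_{ij}\Phi(B_{ij})$, and the simplex constraint $\sum_j\lambda_{ij}=1$ from \EqRef{linear coefficient} lets the $-\rho$ term pass through the combination, so $l(b_i)$ is a convex combination of the numbers $l(B_{ij})$. Consequently, if $B_i$ is an outlier bag then $l(B_{is_i})<0$ (so the selected instance $B_{is_i}$ is an outlier point) and $l(b_i)<0$ as well, being a strictly negative convex combination (so the virtual instance $b_i$ is an outlier point too). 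Since $b_i$ and $B_{is_i}$ each belong to the single bag $B_i$, the pairs contributed by distinct outlier bags are disjoint --- here I would record the mild genericity assumption that the $2N$ training points are pairwise distinct. The count then closes the argument: if $k$ bags are outlier bags, at least $2k$ of the $2N$ training points are outliers, so $2k\le 2N\nu$, i.e.\ $k\le N\nu$, and the fraction of outlier bags is at most $\nu$.

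I expect the main obstacle to be conceptual rather than computational: settling what ``outlier bag'' should mean in the $2N$-point model so that the statement is both correct and the one the paper intends. Under the bag-level reading suggested by \DefRef{outlier bag} (``$b_i$ is an outlier point'') the bound fails by a factor of two; it is precisely the reading ``$f(B_i)=-1$'', combined with the convex-combination identity for $l(b_i)$, that recovers the factor. A secondary point to treat carefully is that $s_i$ is chosen by the \emph{previous} decision function \EqRef{one-class MI decision function} rather than the new one --- but this is harmless, since $f(B_i)=-1$ forces $l(B_{is_i})\le\max_{r}l(B_{ir})<0$ under the new function no matter how $s_i$ was picked.
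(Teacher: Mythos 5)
Your proof is correct, but it reaches the bound by a genuinely different accounting than the paper's. The paper's proof leans on the remark preceding the theorem that the effective $\nu$-parameter of the $2N$-point problem "becomes $\frac{\nu}{2}$", so that at most $\frac{\nu}{2}\cdot 2N=\nu N$ of the $2N$ training points are outliers; it then counts one outlier point per outlier bag (the selected instance $B_{is_i}$, via the contrapositive "if $B_{is_i}$ is inside then $f(B_i)=+1$") to conclude there are at most $\nu N$ outlier bags. You instead read the constraint $0\le\alpha_p\le\frac{1}{\nu\,2N}$ of \EqRef{one-class MI SVM primal2} at face value, which gives effective parameter $\nu$ and hence up to $2\nu N$ outlier points, and you recover the missing factor of two by showing each outlier bag forces \emph{two} training points outside --- $B_{is_i}$ trivially, and $b_i$ via the convexity identity $l(b_i)=\sum_j\lambda_{ij}l(B_{ij})$, which the paper never states. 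This is a real improvement in rigor: the paper's "$\frac{\nu}{2}$" claim is actually inconsistent with its own displayed constraint (with $M=2N$ points, $\frac{1}{\nu\cdot 2N}=\frac{1}{\nu M}$ corresponds to parameter $\nu$, not $\nu/2$), so your lemma is precisely what is needed for the theorem to hold under the QP as written. Two small remarks: your genericity assumption that the $2N$ points be pairwise distinct is unnecessary, since the $\nu$-property counts training indices rather than distinct vectors, so $b_i$ and $B_{is_i}$ contribute two margin errors even if they coincide in feature space; and your observation that $s_i$ is selected by the \emph{old} decision function \EqRef{one-class MI decision function} yet still satisfies $l(B_{is_i})\le\max_r l(B_{ir})<0$ under the new one closes a gap the paper silently steps over.
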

\begin{proof}
If $b_i$ falls outside and
$B_{is_i}$ locates inside the prediction function boundary, it holds $\displaystyle \max_j f(B_{ij} = f(B_{is_i}) = 1$). In
the worst case, there are $\frac{\nu}{2}\times 2N=\nu N$ instances
in the set $\{ B_{is_i} \}$ falls outside the prediction function boundary.
This implies that the largest number of outlier bags is $\nu N$.
So in the worst case the upper bound on outlier bag fraction reaches $\nu$ in \EqRef{one-class MI decision function2}.
$\Box$
\end{proof}

\begin{figure}
  \centering
  \includegraphics[scale=0.6]{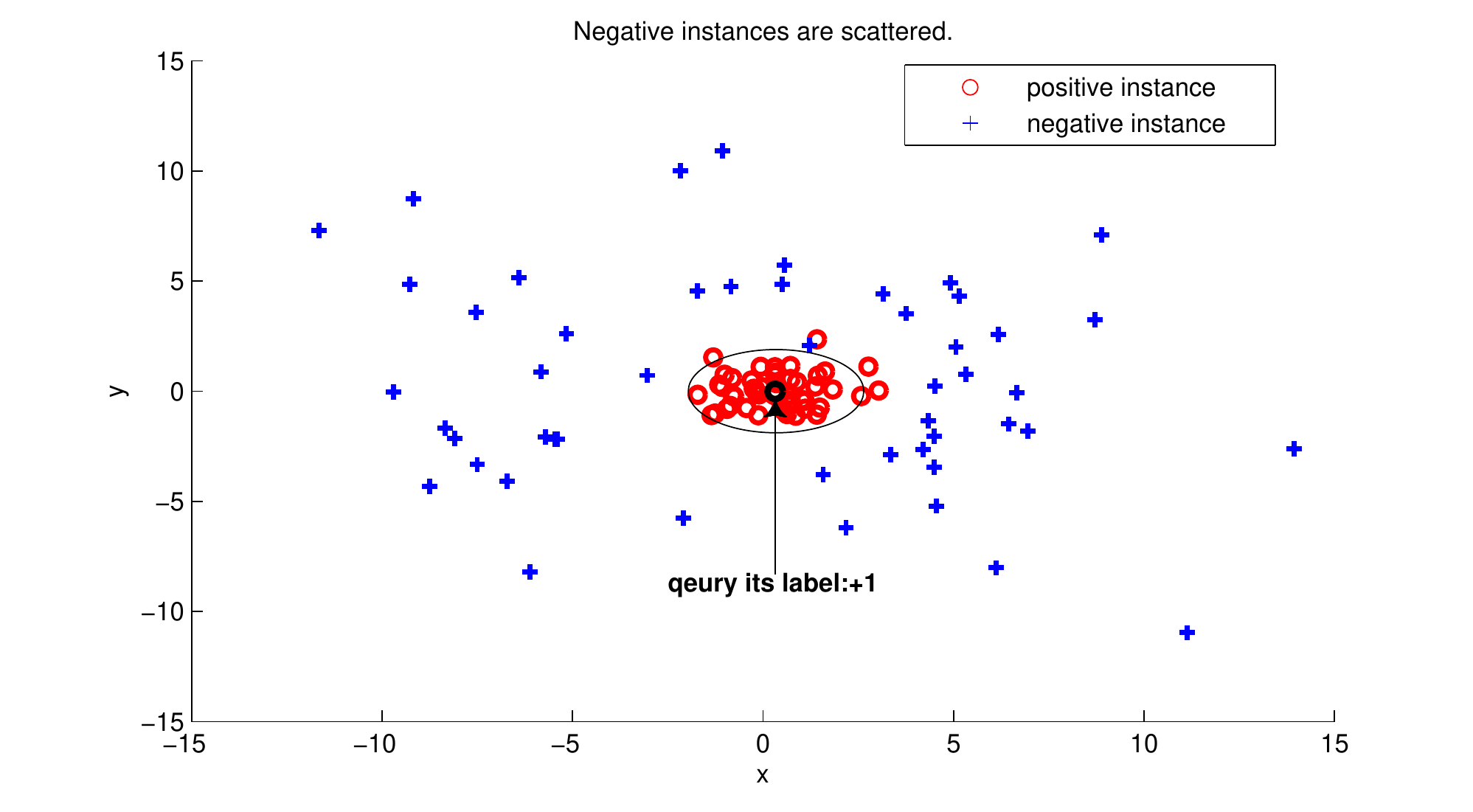}\\
  \caption{The distribution of positive (red) and negative (blue) instances is illustrated. The positive instances are similar to each other and gathered compactly. By comparison, the negative are dissimilar and scattered.}\label{distribution1}
\end{figure}
\begin{figure}
  \centering
  \includegraphics[scale=0.6]{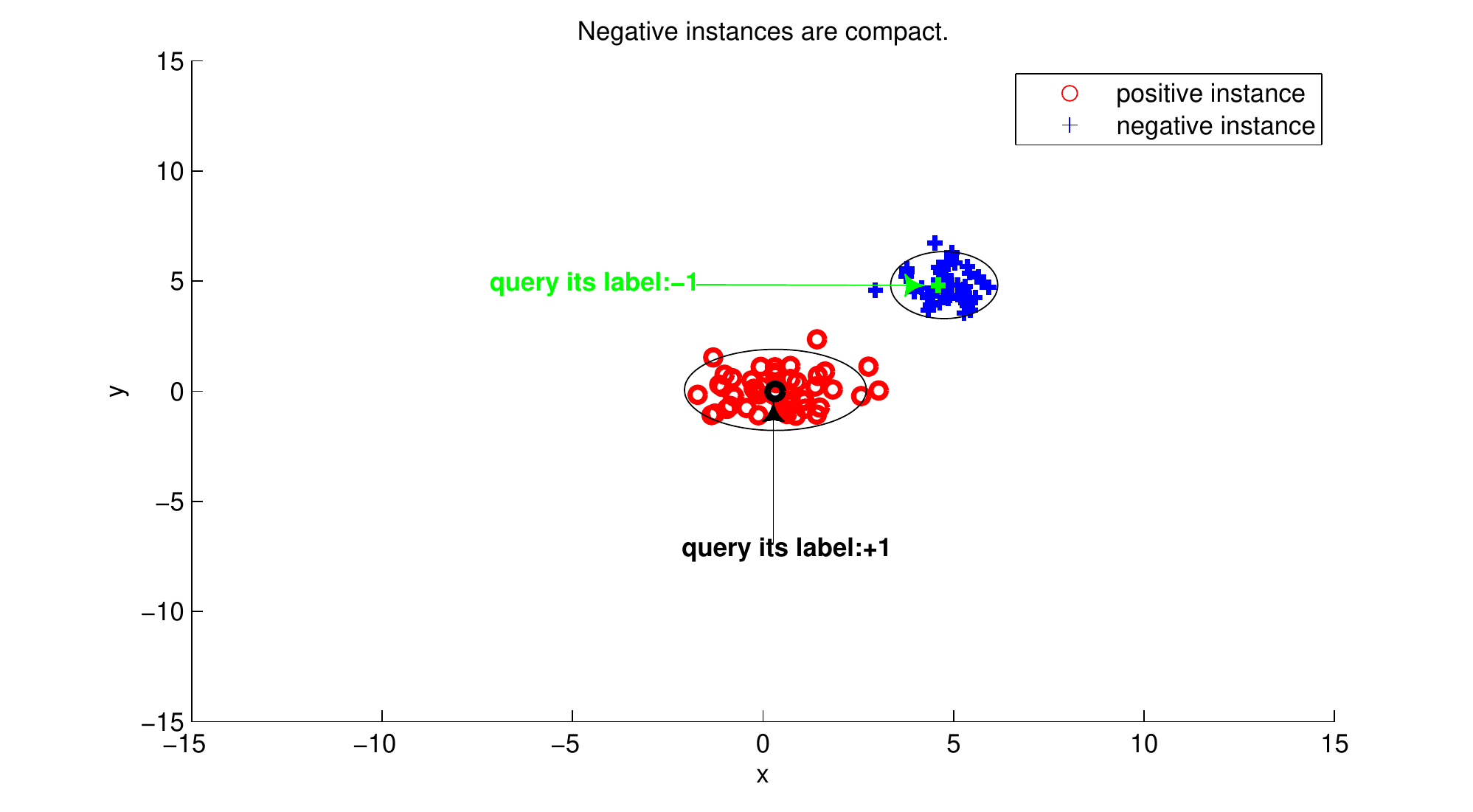}\\
  \caption{The distribution of positive (red) and negative (blue) instances is illustrated. Both the positive and negative instances are clustered. The number of positive instances (red) is equal to that of negative, but negative occupy a smaller circle region. So the negative cluster looks more compact than positive. The query step is needed. The black elliptical circle is positive cluster's boundary. The queried instances (black and green) locate closest to the centers of positive and negative cluster respectively.}\label{distribution2}
\end{figure}

\subsubsection{Query Step}
As is described previously, the positive instances should gather compactly in feature space. On the other hand, the distribution of negative instances is unknown. If the
negative instances are scattered and dissimilar to each other
greatly (\FigRef{distribution1}), we can get the prediction function to enclose most
positive instances in feature space easily. However, negative instances are also clustered sometimes (\FigRef{distribution2}) and even share a higher similarity than positive.
Take face identification as an example, some face images are used to learn a prediction function and each image contains the same background (for example, tree, sky and so
on). If there is a higher similarity between background patches than face ones, the variance of negative instances get smaller than that of positive. The result of training step is to enclose the negative instances with a higher similarity instead of the desired positive cluster.
To avoid this result, our method requires to confirm whether the
cluster enclosed by prediction function boundary is composed of positive or negative instances.
We select an instance $B_{qr}$ characterizes this cluster best to query its label.
The queried instance shares the same label with the most member of this cluster.
If the queried instance is positive, this cluster is positive with a high likelihood and negative otherwise.

The function $l(x)$ is a good measure on the membership of $x$ to the resulted cluster. When $x$ locates outside the cluster, $l(x)<0$ and $l(x)\geq0$ otherwise. The larger $l(x)$ is, the closer $x$ gets to the center of the cluster.
So the instance $B_{qr}$ with maximum $l(x)$ characterizes this cluster best and is selected to query its label:
\begin{equation}\label{instance selection}
\begin{array}{l}
\displaystyle   B_{qr}= \arg\max_{i,j}  l(B_{ij})   \\     
\displaystyle    ~~~~~~~~\mbox{s.t.}~~~~~~ l(B_{ij})\geq 0
\end{array}
\end{equation}
If prediction function boundary is like
a circle, the queried instance usually locates closest to the circle
center than others (\FigRef{distribution2}). If $B_{qr}$ is negative, this cluster should be also negative. Otherwise, this cluster should be positive.
\FigRef{distribution2} illustrates the queried instance location when there are more
than one cluster in feature space.

If the queried instance is negative, we update the current training bags by removing the
instances satisfying $\{B_{ij}| l(B_{ij})\geq 0 \}$ and go back to the training step for a  second instance label query. PMI works in
such a recycle between training and query steps. PMI terminates when the
queried instance is positive or there is one empty bag in training set. PMI is summarized in \AlgRef{alg
mi1learning}.

In \AlgRef{alg mi1learning}, the number of queried
instances satisfies the following:
\begin{theorem} \label{th2}
In \EqRef{one-class MI SVM primal}, if the parameter $\displaystyle
\nu<\frac{1}{N}$, the maximum number of queried instances is
$\displaystyle \min_{i=1,...,N} N_i - 1$ . Otherwise, the maximum
number of queried instance is $\displaystyle
  \lceil\frac{n}{(1-\nu)N}\rceil - 1$.
\end{theorem}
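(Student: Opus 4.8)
The plan is to bound, in one ``negative'' round of the query loop, how many instances are deleted from the current training bags, and then bound how many such rounds can occur before a bag is emptied --- which, by the stopping rule of PMI, ends the algorithm. So the first step is a per-round deletion bound. Run the training step on the current bags. By \ThemRef{th1} at most $\nu N$ of the current bags are outlier bags, so at least $(1-\nu)N$ of them satisfy $f(b_i)=+1$. Under the hypothesis $\max_j f(B_{ij})=f(b_i)$ attached to \EqRef{one-class MI SVM primal}, each such bag has an instance $B_{ij}$ with $l(B_{ij})\ge 0$; in particular the maximiser $B_{qr}$ of \EqRef{instance selection} exists, and if its queried label is negative then the update $\{B_{ij}:l(B_{ij})\ge 0\}$ removes at least one instance from each of these $\ge (1-\nu)N$ pairwise-disjoint bags --- at least $(1-\nu)N$ deletions that round.

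The second step splits on $\nu$ versus $1/N$. If $\nu<1/N$ then $1/(\nu N)>1$, and since $\alpha_i\ge 0$, $\sum_i\alpha_i=1$ force $\alpha_i\le 1<1/(\nu N)$, there are no outlier bags at all, so \emph{every} current bag loses an instance in each negative round. Hence the bag that starts out smallest, of size $\min_i N_i$, survives at most $\min_i N_i-1$ negative rounds before it is reduced to a single instance, at which point a short argument ends the search (a positive bag with one instance has that instance necessarily positive), giving the bound $\min_i N_i-1$. If $\nu\ge 1/N$ I would instead track the grand total $n$ of instances: after $k$ negative rounds at most $n-k(1-\nu)N$ instances remain across the $N$ bags, so the moment this drops below $N$ some bag is empty and PMI has stopped; imposing $n-k(1-\nu)N<N$ and using $\nu N\ge 1$ to absorb the rounding yields $k\le\lceil n/((1-\nu)N)\rceil-1$, the claimed bound.

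The arithmetic is routine; the real work is in two places. First, the per-round deletion bound must be re-justified at the start of \emph{every} round: \ThemRef{th1} needs the QP \EqRef{one-class MI SVM primal} feasible on the shrunken bags (true as long as no bag is empty, which is exactly the loop invariant), and it needs the hypothesis $\max_j f(B_{ij})=f(b_i)$, which the paper itself notes can fail --- so one should either carry that hypothesis through or repeat the argument with the patched model \EqRef{one-class MI SVM primal2}, with \ThemRef{th:another} in the role of \ThemRef{th1}. Second is the off-by-one bookkeeping: one must decide whether the round that finally empties a bag is counted and, when $\nu<1/N$, pin down the configuration once the smallest bag holds a single instance; making the invariant ``no bag is empty before a query'' explicit and arguing termination one step ahead is, I think, the cleanest route to the exact constants.
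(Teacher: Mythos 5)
Your proposal follows essentially the same argument as the paper's own proof: when $\nu<\frac{1}{N}$ there are no outlier bags, so every bag loses at least one instance per negative query and the smallest bag empties after at most $\min_{i}N_i-1$ queries, while otherwise at least $(1-\nu)N$ of the $n$ instances are removed per round, giving $\lceil\frac{n}{(1-\nu)N}\rceil-1$. Your write-up is in fact more careful than the paper's terse version about re-establishing the per-round bound each iteration and about the point where the hypothesis $\max_j f(B_{ij})=f(b_i)$ can fail and must be replaced by the patched model \EqRef{one-class MI SVM primal2} together with \ThemRef{th:another}.
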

\begin{proof}
According to \ThemRef{th1}, $\nu < \frac{1}{N}$ implies there is
no outlier bag in the current training set. The step \ref{retrain
step} in \AlgRef{alg mi1learning} guarantees that there is at least one
instance $B_{ij}$ satisfying $f(B_{ij})=1$ for every bag
$B_i,~~i=1,...,N$. Thus, At least one instance in each bag must be
removed in every instance removal operation. If one training bag gets empty
after removal, PMI terminates at the step \ref{stop
cond1} in \AlgRef{alg mi1learning}. Therefore, the maximum number of
queried instances is $\min_{i=1,...,N} N_i-1$.

If $\nu \geq \frac{1}{N}$, at least $(1-\nu)N$ instances will be
removed at the step \ref{pmisvm_step end} according to
\ThemRef{th1}. So the maximum number of queried instances is
$\lceil\frac{n}{(1-\nu)N}\rceil - 1$, where $\lceil x\rceil$ denotes the
minimum integer number not smaller than $x$.    $\Box$
\end{proof}

The step \ref{stop cond1} in \AlgRef{alg mi1learning} needs some further
explanations.
If all members of one bag are labeled positive, these instances
belongs to the positive class in that
each training bag contains at least one positive instance.

However, the instance label information is not always available.
When it is difficult to query instance label, our solution is to assume that the dissimilarities between negative instances are larger than positive ones. Thus, \AlgRef{alg
mi1learning} terminates at the step \ref{retrain
step} and outputs the prediction function $f(x)$ without running the query step.
In most real applications, negative instances cover a diverse range of backgrounds or noise. Therefore, it is almost impossible that negative instances share a higher similarity than positive. It is high likely to achieve the concept on positive instance after only one training step.
The experimental results in
the next section illustrate that our hypothesis is applicable for most of
data sets. It also implies that the maximum query number in
\ThemRef{th2} is not a tight bound. The number of queried instance
label usually is far smaller than the result in
\ThemRef{th2} and close to 1 mostly. It suggests that the cost of
instance label query is very limited, which is much lower than that
of labeling a large number of negative bags.

\begin{algorithm}
\caption{PMI} \label{alg mi1learning}
\begin{algorithmic}[1]
\Require

the training bags $\mathcal{B}=\{B_i,~~i=1,...,N\}$;

kernel function parameters;

$\nu$;

\State Solve \EqRef{kernel squared distance sum} to get $\lambda$;
\label{lambda step}

\State Solve \EqRef{one-class MI SVM primal} to get a prediction function
$f_0 (\EqRef{one-class MI decision function})$. Set $f=f_0$; \label{train step}

\State If there exists one bag $B_i$ satisfying both $\alpha_i<\frac{1}{\nu N}$ and $\max_{j\in 1,...,N_i}f(B_{ij})=-1$, select an instance $B_{is_i}$ by \EqRef{instance
selection} from every training bag $B_i, ~i=1,...,N$;

\State Solve \EqRef{one-class MI SVM primal2} to get a prediction function $f_1 (\EqRef{one-class MI decision function2})$. Set $f=f_1$; \label{retrain step}

\State If no instance label information is available, PMI
terminates; \label{no query}

\State If there is one bag $B_i,~i=1,...,N$ satisfying that ${\forall j\in 1,...,N_i}~~ f(B_{ij}) = 1$, PMI terminates; \label{stop cond1}

\State Select the most certain instance by \EqRef{instance
selection} to query its label;

\State If the queried label is positive, PMI ends. Otherwise, update the training set $\mathcal{B}$ by removing the instances $\forall i,j~~B_{ij}$ satisfying $f(B_{ij})=1$.
Go back to the step
\ref{lambda step}; \label{pmisvm_step end}

\Ensure

prediction function $f$;
\end{algorithmic}
\end{algorithm}

\subsection{Time Complexity Analysis}
The time complexity analysis of PMI involves \EqRef{kernel squared
distance sum}, \EqRef{one-class MI SVM primal}, \EqRef{one-class MI
SVM primal2} and the number of the queried instances. These three
objective functions \EqRef{kernel squared distance sum},
\EqRef{one-class MI SVM primal} and \EqRef{one-class MI SVM primal2}
are three quadratic programming problems with the time complexities
$\textit{O}( n^3 )$, $\textit{O}( N^3)$, $\textit{O}( N^3 )$
respectively. Since $n>N$ in general, the term $\textit{O}( n^3
)$ dominates the time complexity of PMI. According to \ThemRef{th2},
the number of queried instances is related to $\displaystyle
\frac{n}{N}$. So the time complexity is $\displaystyle
\textit{O}( \frac{n^4}{N} )$.
But in real applications, the queried instance number is much smaller than the theoretical result in \ThemRef{th2} and can be approximated to a constant. The time complexity of PMI is close to $\displaystyle
\textit{O}(n^3)$.

\section{Experiments}
This section presents the experimental results on five benchmark and one face image data sets. We evaluate PMI compared the traditional MIL algorithms. To solve the quadratic programming problem in \EqRef{one-class SVM primal}, the optimization toolbox ``Mosek''
\cite{Mosek} is used.

\subsection{Benchmark Data Sets}
Five benchmark data sets are used in our experiment. They are
Musk1, Musk2, Elephant, Fox and Tiger, which were used frequently to
test new MIL algorithms in previous
studies\cite{dietterich1997solving,fung2007multiple,andrews2003support}. Both Musk1 and Musk2 come from UCI data set web
site \cite{ucidataset}. And the other three derive from Corel image
set \cite{Corel}. The details of five data sets are described in
\TabRef{tb1}.

\begin{table*}[htbp]
\centering
 \caption{five datasets details}  \label{tb1}
 \begin{tabular}{|c|c|c|c|c|c|}
\hline
  data set & bag number (positive) & mean bag size & dimensionality \\
\hline
  Musk1 & 92(47) & 5.17 & 166 \\
\hline
  Musk2 & 102(36) & 64.69 & 166 \\
\hline
  Elephant & 200(100) & 6.96 & 230 \\
\hline
  Fox   &  200(100) &  6.90 &  230 \\
\hline
  Tiger &  200(100) &  6.10 &  230  \\
\hline
 \end{tabular}
\end{table*}

The accuracies of PMI on five benchmark data sets are recorded in \TabRef{tb2}. To compare with PMI, we also provide the performances \cite{fung2007multiple,andrews2003support} of several other traditional MIL algorithms .
These traditional MIL algorithms are mi-SVM \cite{andrews2003support}, MI-SVM\cite{andrews2003support},
EM-DD\cite{zhang2002dd} and MICA\cite{fung2007multiple}.

The same with previous MIL studies, 10-fold cross validation
strategy is used to get the accuracies of PMI. Both positive and
negative bags are divided into 10 folds randomly and one fold
positive and negative bags are selected as testing set. The remaining
9 fold positive bags (without negative ones) are the training set.
The difference between that of PMI and 10-fold cross validation used
in traditional MIL studies is that the training set of PMI does not
contain 9 fold negative bags.
\begin{table*}[htbp]
\centering
 \caption{The numbers of positive bags used for training in PMI and other MIL algorithms are listed.}  \label{tb:pos_bags}
 \begin{tabular}{|c|p{1.2cm}<{\centering}|p{1.2cm}<{\centering}|c|p{1.2cm}<{\centering}|p{1.2cm}<{\centering}|}
\hline
     & Musk1 & Musk2 & Elephant & Fox & Tiger \\
\hline
  \textbf{PMI}  & \textbf{47} & \textbf{36} & \textbf{100} & \textbf{100} & \textbf{100}\\
\hline
  mi-SVM & 92 & 102 & 200 & 200 & 200\\
\hline
  MI-SVM & 92 & 102 & 200 & 200 & 200\\
\hline
  MICA   & 92 & 102 & 200 & 200 & 200\\
\hline
  EM-DD  & 92 & 102 & 200 & 200 & 200\\
\hline
 \end{tabular}
\end{table*}

\begin{table*}[htbp]
\centering
 \caption{The classification accuracies of five algorithms on the five data sets are presented. The results are percentage of correctly predicted bags. These accuracies are average results of 10 runs.}  \label{tb2}
 \begin{tabular}{|c|c|c|c|c|c|}
\hline
     & Musk1 & Musk2 & Elephant & Fox & Tiger \\
\hline
  PMI  & 79.1$\pm$1.1 & 85.1$\pm$1.2 & 78.3$\pm$0.7 & 57.9$\pm$1.5 & 52.5$\pm$1.0\\
\hline
  mi-SVM & 87.4 & 83.6 & 80.0 & 57.9 & 78.9\\
\hline
  MI-SVM & 77.9 & 84.3 & 73.1 & 58.8 & 66.6\\
\hline
  MICA   & 84.4 & 90.5 & 82.5 & 62.0 & 82.0\\
\hline
  EM-DD  & 84.8 & 84.9 & 78.3 & 56.1 & 72.1\\
\hline
 \end{tabular}
\end{table*}

\TabRef{tb:pos_bags} shows the numbers of positive bags used for training in PMI and other traditional MIL algorithms. \TabRef{tb2} reports the accuracies of five approaches on five benchmark data sets.
These accuracies are the average of 10 runs. PMI
terminates at the step \ref{no query} of \AlgRef{alg mi1learning}
and returns $f(x)$ due to no available instance label of
these five data sets. PMI achieves close results
to those of the other traditional MIL methods mostly. The
result on Musk2 is better than mi-SVM, MI-SVM, EM-DD, but worse than
MICA. Musk2 has only 39 positive bags, which are obviously smaller
than the negative bag number 63. So it is exciting that PMI achieves
the similar performance using much fewer training bags (only
positive bags) to those of other MIL approaches. PMI achieves a little
lower accuracy on Musk1 than mi-SVM, MICA and EM-DD, but a little larger than
MI-SVM. The results of PMI are
close to those of the other MIL methods on Elephant and Fox.

According to the performance comparison in \TabRef{tb2}, if the traditional MIL methods achieve a high accuracy on one data set, PMI can also obtain similar result.
But when the performance of traditional MIL method becomes very low(only about 60\% accuracy on Fox data set), PMI will get a worse result than those of traditional MIL methods.
We explain the phenomenon as follows.
If there is a enough large margin and no overlap between positive and negative instances, the traditional MIL methods with positive and negative training bags usually get a perfect accuracy. Due to no overlap between positive and negative instances, PMI with only positive training bags can describe the positive instance distribution accurately and also achieves satisfying performance.
However, if there exists a big overlap between positive and negative, we will get a poor result though both positive and negative bags are used as the training set. So the result will degrade furthermore with only positive training bags.

\subsection{Face Identification}
This experiment shows result of PMI on face identification. The image set
\cite{aam} is used to evaluate PMI. The task of face identification is
predict whether an image contains face(s) or not. Each of these face
images contains one person's face and the similar green background.
This image set collects 37 persons' faces from various views under
different light conditions. Three face samples for each person are
recorded. So this face image set includes totally 111 face images.
Every person has various expressions. These 37 persons include
various types: male and female, young and old. Each image is
$640\times480$ size with JPEG file format. No non-face image is
provided in this image set.
\FigRef{face images} illustrates some examples of this image set. Much
more non-face images than face are needed in previous MIL studies. But no non-face image is provided in this data set.

\begin{figure}
  \centering
  \includegraphics[scale=0.5]{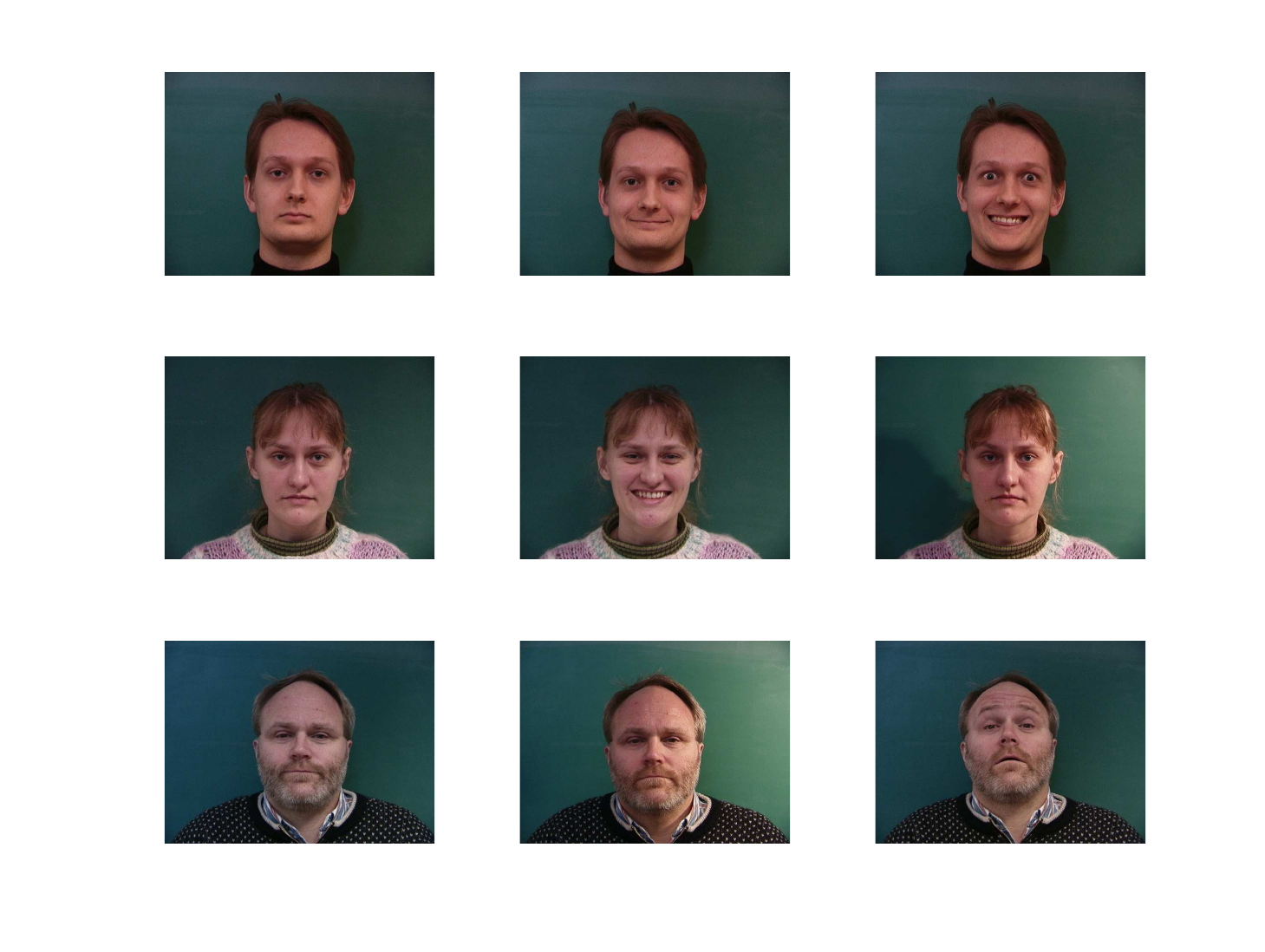}\\
  \caption{The face images from three persons in this data set are shown. Each row corresponds one person's images.}\label{face images}
\end{figure}

The feature extraction method in \cite{chen2006miles} is used in our
experiments. Only the brief introduction is provided. Please read
\cite{chen2004image} for more details. The first is to split
each image into $4\times4$ blocks and compute L, U, V mean values of
each block in LUV color space. Secondly, 4-order wavelet
transformation is applied to L value on each block to compute mean values of LH,
HL, HH bands. Each block is transformed into a
6-dimension (L,U,V,LH,HL,HH) feature vector and scaled into $[0,1]$. We use
the toolkit ``JSEG''\cite{jseg2001deng} to segment each image into
regions. Each region is represented by the mean feature vector of
the blocks in this region. So one region corresponds to an instance. An
image is associated with a bag. The average instance number of
positive bags is 7.8.

\begin{table}[htbp]
\centering
 \caption{The numbers of positive bags used for training in PMI and other MIL algorithms are listed.}  \label{tb_face_pos_bags}
 \begin{tabular}{|c|p{1.5cm}<{\centering}|p{1.8cm}<{\centering}|p{1.8cm}<{\centering}|p{1.5cm}<{\centering}|p{1.5cm}<{\centering}|}
\hline
algorithm & \textbf{PMI} & mi-SVM & MI-SVM & DD & EM-DD\\
\hline
training bag number & \textbf{111} & 211 & 211 &  211 & 211\\
\hline
 \end{tabular}
\end{table}

\begin{table}[htbp]
\centering
 \caption{The classification accuracies of both traditional MIL methods and PMI.}  \label{tb_face_accuracy}
 \begin{tabular}{|c|p{1.5cm}<{\centering}|p{1.8cm}<{\centering}|p{1.8cm}<{\centering}|p{1.5cm}<{\centering}|p{1.5cm}<{\centering}|} 
\hline
algorithm & PMI & mi-SVM & MI-SVM & DD & EM-DD\\  %
\hline
accuracy & 90.4$\pm$1.9 & 89.6$\pm$1.3 & 86.5$\pm$1.8 &  85.2$\pm$2.4 & 97.2$\pm$1.6 \\  %
\hline
 \end{tabular}
\end{table}

We select 100 background images(for the balance between positive and negative bags)
randomly from
\cite{CaltechImage,LFeifei2004learning} as negative bags
for testing and training in other traditional MIL methods. These
100 negative images contain rich contents and backgrounds.
The average instance number of negative bags is about 19. The
feature extraction method is the same as that of the face images.
The numbers of training bags in PMI and other MIL algorithms are reported in \TabRef{tb_face_pos_bags}.
The 5-fold cross validation accuracies of both the traditional MIL
algorithms and PMI are reported in \TabRef{tb_face_accuracy}. Each
accuracy is the average of 10 runs. PMI achieves close performances to the other MIL algorithms. The accuracy of mi-SVM keeps very close to that of PMI. In contrast, MI-SVM and DD fall a little behind the other three methods.

\begin{table}[htbp]
\centering
 \caption{The number of queried instances under different $\nu$ and $\gamma$ combinations}  \label{tb3}
 \begin{tabular}{|p{1.6cm}<{\centering}|p{1.2cm}<{\centering}|p{1.2cm}<{\centering}|p{1.2cm}<{\centering}|p{1.2cm}<{\centering}|p{1.2cm}<{\centering}|}  
\hline
 \backslashbox{$\nu$}{$\gamma$}  & 60 & 70 & 80 & 90 & 100 \\
\hline
  0.01  & 1 & 1 & 1 & 1 & 1\\
\hline
  0.05  & 1 & 1 & 1 & 1 & 1\\
\hline
  0.1   & 1 & 1 & 1 & 1 & 1\\
\hline
  0.2   & 1 & 1 & 1 & 1 & 1\\
\hline
  0.3   & 1 & 1 & 1 & 1 & 1\\
\hline
  0.5   & 1 & 1 & 1 & 1 & 1\\
\hline

 \end{tabular}
\end{table}

To validate \ThemRef{th2}, we use all face images as the training set.
\TabRef{tb3} reports the number of queried instances under different
parameter combinations of $\gamma$ and $\nu$.
 According to \ThemRef{th2}, the maximum number of queried instances is larger than
$\lceil7.81\rceil-1=7$ when $\nu\geq \frac{1}{N}=\frac{1}{111}$.
7.81 is the average instance number of positive bags. The largest
number in \TabRef{tb3} is much smaller than
7, which demonstrates \ThemRef{th2} is correct.

\section{Conclusion}
This paper proposes a new algorithm PMI, which learns a
prediction function with only positive bags. Our proposed algorithm is to reduce the label cost significantly compared with previous MIL algorithms without much accuracy loss.
A detailed theoretic analysis is also provided. In most real applications, the queried instance number is much smaller than the theoretical result in \ThemRef{th2} and can be approximated to a constant. The time complexity of PMI is approximated as $\displaystyle
\textit{O}(n^3)$.
 Experimental results illustrate that
PMI usually achieves close performance to those of traditional MIL
methods mostly especially when traditional MIL algorithms can predict bag label with a high accuracy.
In the future work, we try to apply some dimensional reduction approaches \cite{hou2009stable,hou2010multiple,hou2014multiple} on our proposed method to improve the efficiency.

\section*{Acknowledgement}
Acknowledge here.

\section*{Appendix} Appendix here.

%









\bibliographystyle{abbrv}
\bibliography{my}

\begin{thebibliography}{10}

\bibitem{aam}
AAM.
\newblock {Active Appearance Model}.
\newblock \url{http://www.imm.dtu.dk/~aam/}.

\bibitem{andrews2003support}
S.~Andrews, I.~Tsochantaridis, and T.~Hofmann.
\newblock {Support vector machines for multiple-instance learning}.
\newblock In {\em Advances in Neural Information Processing Systems}, pages
  577--584. The MIT press, 2003.

\bibitem{babenko2011robust}
B.~Babenko, M.-H. Yang, and S.~Belongie.
\newblock Robust object tracking with online multiple instance learning.
\newblock {\em Pattern Analysis and Machine Intelligence, IEEE Transactions
  on}, 33(8):1619--1632, 2011.

\bibitem{CaltechImage}
Caltech.
\newblock {Caltech 101 Objects Image Dataset}.
\newblock
  \url{http://www.vision.caltech.edu/Image\_Datasets/Caltech101/Caltech101.html}.

\bibitem{chen2006miles}
Y.~Chen, J.~Bi, and J.~Z. Wang.
\newblock Miles: Multiple-instance learning via embedded instance selection.
\newblock {\em Pattern Analysis and Machine Intelligence, IEEE Transactions
  on}, 28(12):1931--1947, 2006.

\bibitem{chen2004image}
Y.~Chen and J.~Z. Wang.
\newblock Image categorization by learning and reasoning with regions.
\newblock {\em The Journal of Machine Learning Research}, 5:913--939, 2004.

\bibitem{Corel}
Corel.
\newblock {Corel Image Feature Set}.
\newblock \url{http://archive.ics.uci.edu/ml/datasets/Corel+Image+Features}.

\bibitem{jseg2001deng}
Y.~Deng and B.~S. Manjunath.
\newblock Unsupervised segmentation of color-texture regions in images and
  video.
\newblock {\em Pattern Analysis and Machine Intelligence, IEEE Transactions on
  (PAMI)}, 23:800--810, 2001.

\bibitem{dietterich1997solving}
T.~Dietterich, R.~Lathrop, and T.~Lozano-P{\'e}rez.
\newblock {Solving the multiple instance problem with axis-parallel
  rectangles}.
\newblock {\em Artificial Intelligence}, 89(1-2):31--71, 1997.

\bibitem{fung2007multiple}
G.~Fung, M.~Dundar, B.~Krishnapuram, and R.~B. Rao.
\newblock Multiple instance learning for computer aided diagnosis.
\newblock In {\em Advances in neural information processing systems}, pages
  425--433. MIT, 2007.

\bibitem{hou2014multiple}
C.~Hou, F.~Nie, C.~Zhang, D.~Yi, and Y.~Wu.
\newblock Multiple rank multi-linear svm for matrix data classification.
\newblock {\em Pattern Recognition}, 47(1):454--469, 2014.

\bibitem{hou2009stable}
C.~Hou, C.~Zhang, Y.~Wu, and Y.~Jiao.
\newblock Stable local dimensionality reduction approaches.
\newblock {\em Pattern Recognition}, 42(9):2054--2066, 2009.

\bibitem{hou2010multiple}
C.~Hou, C.~Zhang, Y.~Wu, and F.~Nie.
\newblock Multiple view semi-supervised dimensionality reduction.
\newblock {\em Pattern Recognition}, 43(3):720--730, 2010.

\bibitem{LFeifei2004learning}
R.~F. L.~Fei-Fei and P.~Perona.
\newblock Learning generative visual models from few training examples: An
  incremental bayesian approach tested on 101 object categories.
\newblock In {\em Computer Vision and Pattern Recognition, 2004 IEEE
  International Conference on, Workshop on Generative-Model Based Vision
  (CVPR)}. IEEE, 2004.

\bibitem{li2012drosophila}
Y.-X. Li, S.~Ji, S.~Kumar, J.~Ye, and Z.-H. Zhou.
\newblock Drosophila gene expression pattern annotation through multi-instance
  multi-label learning.
\newblock {\em Computational Biology and Bioinformatics, IEEE/ACM Transactions
  on}, 9(1):98--112, 2012.

\bibitem{Mosek}
Mosek.
\newblock {Mosek Optimization Toolbox}.
\newblock \url{http://www.mosek.com}.

\bibitem{taylor2001estimating}
B.~Sch{\"o}lkopf, J.~Platt, J.~Shawe-Taylor, A.~Smola, and R.~Williamson.
\newblock Estimating the support of a high-dimensional distribution.
\newblock {\em Neural Computation}, 13(7):1443--1471, 2001.

\bibitem{taylor2000support}
B.~Sch\"{o}lkopf, R.~Williamson, A.~Smola, J.~Shawe-Taylor, and J.~Platt.
\newblock Support vector method for novelty detection.
\newblock In {\em Advances in Neural Information Processing Systems},
  volume~12, pages 582--588. MIT press, 2000.

\bibitem{tang2010image}
J.~Tang, H.~Li, G.-J. Qi, and T.-S. Chua.
\newblock Image annotation by graph-based inference with integrated
  multiple/single instance representations.
\newblock {\em Multimedia, IEEE Transactions on}, 12(2):131--141, 2010.

\bibitem{ucidataset}
UCI.
\newblock {Machine Learning Repository}.
\newblock \url{http://archive.ics.uci.edu/ml/}.

\bibitem{zafra2011multiple}
A.~Zafra, E.~L. Gibaja, and S.~Ventura.
\newblock Multiple instance learning with multiple objective genetic
  programming for web mining.
\newblock {\em Applied Soft Computing}, 11(1):93--102, 2011.

\bibitem{zhang2005multiple}
C.~Zhang, X.~Chen, M.~Chen, S.~Chen, and M.~Shyu.
\newblock A multiple instance learning approach for content based image
  retrieval using one-class support vector machine.
\newblock In {\em Multimedia and Expo, 2005. ICME 2005. IEEE International
  Conference on}, pages 1142--1145. IEEE, 2005.

\bibitem{zhang2002dd}
Q.~Zhang and S.~Goldman.
\newblock {EM-DD: An improved multiple-instance learning technique}.
\newblock In {\em Advances in neural information processing systems}, volume~2,
  pages 1073--1080. MIT Press, 2002.

\bibitem{zhang2002content}
Q.~Zhang, S.~A. Goldman, W.~Yu, and J.~E. Fritts.
\newblock {Content-Based Image Retrieval Using Multiple-Instance Learning}.
\newblock In {\em International Conference on Machine Learning (ICML)}, pages
  682--689, 2002.

\end{thebibliography}
\end{document}